\documentclass{article}

\usepackage[numbers,square,sort&compress]{natbib}
\usepackage[preprint]{neurips_2026}

\usepackage[utf8]{inputenc} 
\usepackage[T1]{fontenc}    
\usepackage{hyperref}       
\usepackage{url}            
\usepackage{booktabs}       
\usepackage{amsfonts}       
\usepackage{nicefrac}       
\usepackage{microtype}      
\usepackage{xcolor}         

\usepackage{amsmath}
\usepackage{amssymb}
\usepackage{graphicx}
\usepackage{algorithm}
\usepackage{algorithmic}
\usepackage{multirow}
\usepackage{subcaption}
\usepackage{bm}
\usepackage{mathtools}
\usepackage{enumitem}
\usepackage{booktabs}
\usepackage{tikz}
\usetikzlibrary{positioning,arrows.meta}

\definecolor{cgray}{HTML}{7A7A7A}
\definecolor{corange}{HTML}{E8743B}
\definecolor{cpurple}{HTML}{7C74B8}

\newcommand{\R}{\mathbb{R}}

\newcommand{\diag}{\operatorname{diag}}

\usepackage{amsthm}
\newtheorem{proposition}{Proposition}

\newtheorem{theorem}{Theorem}
\newtheorem{lemma}{Lemma}

\newtheorem{assumption}{Assumption}

\title{CORA: Per-Slice Coherent Orthogonal Rotation for SVD-based Low-Rank Adaptation}
\author{
  Pengcheng Wang \\
  Purdue University \\
  \texttt{wang4495@purdue.edu} \\
  \And
  Ziran Liu \\
  Shanghai Institute for Mathematics and \\ 
  Interdisciplinary Sciences / Fudan University\\
  \texttt{zliu@simis.cn} \\
  \And
  Wei Wang \\
  Futurewei Technologies, Inc. \\
  \texttt{rickweiwang@futurewei.com} \\
  \And
  Wei Jiang \\
  Futurewei Technologies, Inc. \\
  \texttt{wjiang@futurewei.com} \\
}

\begin{document}

\maketitle
\begin{abstract}
Parameter-Efficient Fine-Tuning (PEFT) commonly adapts pretrained weights through low-rank updates, and recent methods further exploit the singular value decomposition (SVD) of the base weight for initialization or subspace selection. However, these methods do not explicitly preserve the coupled geometry between the pretrained left and right singular bases. Motivated by recent minimum-perturbation theory, which shows that stable finetuning follows a coherent SVD rotation in which a single orthogonal $Q$ acts on both the left singular basis $U_0$ and the right singular basis $V_0$, we prove a per-slice analogue: each row slice of $W_0$ can be adapted by a shared orthogonal rotation $Q_i$ on its left basis $U_i$ and right basis $V_i$ together with a diagonal spectrum shift. We implement this form as \textbf{CORA} (\emph{Coherent Orthogonal Rotation Adaptation}), which applies per-slice orthogonal rotations and a per-layer diagonal scale to the rank-$r$ SVD truncation of $W_0$. CORA uses $\tfrac{1}{2}m(r{-}1)$ trainable parameters per linear layer, about $4{\times}$ fewer than LoRA at the same rank. CORA outperforms LoRA, DoRA, PiSSA, and MiLoRA on commonsense reasoning and code generation while using about $8{\times}$ fewer parameters.
\end{abstract}


\section{Introduction}\label{sec:intro}

%
%

Large Language Models (LLMs) are commonly adapted to downstream tasks through Parameter-Efficient Fine-Tuning (PEFT), since full finetuning is expensive in compute, storage, and deployment. The dominant PEFT method, LoRA~\cite{hu2022lora}, freezes a pretrained weight $W_0 \in \mathbb{R}^{m \times k}$ and learns a low-rank additive update
\[
    \Delta W = BA,
    \qquad
    B \in \mathbb{R}^{m \times r},\ A \in \mathbb{R}^{r \times k},\ r \ll \min(m, k).
\]
This makes adaptation cheap and modular but leaves open which geometric degrees of freedom the adapter should use to modify a pretrained weight.

SVD-based PEFT methods~\cite{meng2024pissa,wang2025milora,lingam2024svft,wang2025kasa} make this choice explicit by decomposing $W_0 = U_0 \Sigma_0 V_0^\top$ into three adaptation axes: the left basis $U_0$, the right basis $V_0$, and the spectrum $\Sigma_0$. They access these axes through additive low-rank updates: PiSSA initializes $A$ and $B$ from the principal singular components~\cite{meng2024pissa}, while MiLoRA updates only the minor components~\cite{wang2025milora}. Orthogonal finetuning methods instead apply a learned orthogonal $R$ as $W = R W_0$ to rotate the basis without altering the spectrum~\cite{qiu2023controlling,liu2024boft}.

Recent theoretical work~\cite{wang2025wsbm} shows that under stability assumptions on a well-pretrained $W_0$, the Frobenius minimum-perturbation finetuning takes the structured form
\[
    \widetilde W
    =
    U_0 Q(\Sigma_0+\Delta\Sigma)Q^\top V_0^\top,
\]
where $Q_U = Q_V = Q$ is a coherent rotation acting on both sides of the spectrum and $\Delta\Sigma$ is a diagonal shift in singular values. Additive SVD-LoRA methods and orthogonal finetuning methods each realize a strict subset of this form: the former bundles $Q$ and $\Delta\Sigma$ jointly inside a rank-$r$ update $\Delta W = BA$, while the latter learns $Q$ alone with $\Delta\Sigma = 0$.

Existing parameterizations realize the coherent rotation--spectrum form only partially. Additive LoRA-style methods can in principle change both basis and spectrum, but these effects are entangled inside the same rank-$r$ update $BA$, so rotation and spectrum share a single parameter budget without a way to separate them. Orthogonal finetuning methods make rotation explicit, but strict orthogonality preserves singular values and cannot express spectral shifts without an additional spectral parameterization.

\begin{figure}[t]
\centering
\includegraphics[width=\linewidth]{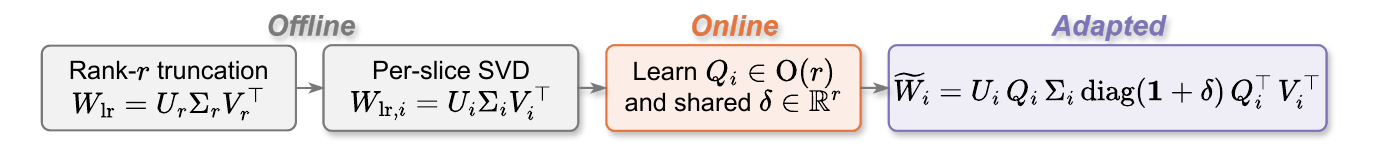}
\caption{\textbf{Overview of CORA.} A weight matrix $W$ is replaced by its rank-$r$ SVD reconstruction $W_{\mathrm{lr}}$, then partitioned into $s = m/r$ row slices. Per-slice SVD factors are precomputed; only the per-slice orthogonal $Q_i$ and a per-layer diagonal scale $\boldsymbol{\delta} \in \mathbb{R}^r$ are learned, with $Q_{U,i} = Q_{V,i} = Q_i$ (Thm.~\ref{thm:slice_coherent}). Per-layer trainable parameters are $\tfrac{1}{2}m(r{-}1)$, roughly $4\times$ fewer than LoRA at the same $r$.}
\label{fig:cora_overview}
\end{figure}

The coherent rotation form is stated for the full weight matrix, while practical adapters operate on structured low-rank matrices to keep the parameter budget small. The global form does not specify how to realize this efficiently.

We address this limitation by extending the coherent rotation to a per-slice adapter. Specifically, we partition a weight matrix into row slices
$W_i \in \mathbb{R}^{r \times k}$ and show that, under mild regularity assumptions inherited from $W_0$, each slice has the analogous
minimum-perturbation form
\[
    W_i^\star
    =
    U_i Q_i(\Sigma_i+\Delta\Sigma_i)Q_i^\top V_i^\top,
    \qquad Q_{U,i}=Q_{V,i}=Q_i,
\]
where $(U_i,\Sigma_i,V_i)$ is the per-slice SVD. We implement this per-slice form as a compact adapter by applying it to the rank-$r$ reconstruction $W_{\mathrm{lr}} = U_{0,r}\Sigma_{0,r}V_{0,r}^\top$ and freezing the residual $W_0 - W_{\mathrm{lr}}$. In practice, we parameterize
$Q_i$ as a block-diagonal closed-form Cayley map of a learnable skew-symmetric matrix, which keeps $Q_i$ exactly orthogonal throughout
training (Section~\ref{sec:theory_implicit}). The resulting method, \textbf{CORA} (Coherent Orthogonal Rotation Adaptation), realizes the coherent rotation form at $\tfrac{1}{2}\,m\,(r{-}1)$ parameters per linear layer, roughly $4{\times}$ fewer than LoRA at the same rank.
We evaluate CORA against SVD-based and general PEFT baselines on three task families: commonsense reasoning (8 tasks), code (HumanEval), and math (GSM8K and MATH), across LLaMA-2-7B, LLaMA-3-8B, and Mistral-7B.
CORA improves the parameter--accuracy trade-off on commonsense reasoning and code generation, exceeding LoRA, DoRA, PiSSA, and MiLoRA at about $8{\times}$ fewer parameters.

Our contributions are:
\begin{itemize}[leftmargin=*, labelindent=0pt, itemindent=0pt, itemsep=5pt, topsep=2pt]
\item We prove a per-slice analog of the coherent rotation theorem of~\cite{wang2025wsbm}: every row slice of $W_0$ follows the same minimum-perturbation form (Theorem~\ref{thm:slice_coherent}), extending it to the per-slice granularity of practical adapters.

\item We parameterize each per-slice rotation as $Q_i = U_i^\top R_i U_i$ for a single learnable orthogonal $R_i$, enforcing the coherent rotation condition $Q_{U,i}^\star = Q_{V,i}^\star$ (Proposition~\ref{prop:R_equiv_shareQ}) with no constraint or penalty during training. CORA applies this parameterization to the rank-$r$ reconstruction $W_{\mathrm{lr}}$ of $W_0$, using $\tfrac{1}{2}m(r{-}1)$ trainable parameters per linear layer, roughly $1/4$ of LoRA at matched rank.

\item CORA reaches $\mathbf{82.16\%}$ on LLaMA-2-7B commonsense at $6.6$\,M parameters, exceeding PiSSA, MiLoRA, DoRA, and LoRA at $\sim 8{\times}$ fewer parameters; the same method transfers to LLaMA-3-8B and Mistral-7B on coding, reaching $\mathbf{48.2}$ and $\mathbf{40.2}$ HumanEval Pass@1 respectively at $10.3$\,M parameters, $4$ to $12\times$ fewer than LoRA, DoRA, LoRI, and DiaBlo.
\end{itemize}

\section{Related work}\label{sec:related}


\textbf{Low-rank and SVD-based adaptation.}
LoRA~\cite{hu2022lora} learns $\Delta W = BA$ with $B \in \R^{m\times r}$ and $A \in \R^{r\times k}$ while freezing $W_0$; close variants augment the update with a learnable magnitude (DoRA~\cite{liu2024dora}), adaptive per-layer rank (AdaLoRA~\cite{zhang2023adalora}), or cross-layer parameter sharing (VeRA~\cite{kopiczko2024vera}).
A growing subfamily uses the pretrained SVD $W_0 = U_0 \Sigma_0 V_0^\top$ as the design principle: PiSSA~\cite{meng2024pissa} and MiLoRA~\cite{wang2025milora} initialize $A, B$ from the principal and minor singular components respectively, CorDA~\cite{yang2024corda} uses an activation-weighted decomposition $W_0 X X^\top$ for task-aware initialization, SVFT~\cite{lingam2024svft} freezes $U_0, V_0$ and trains a sparse coupling matrix $M$ so that $\Delta W = U_0 M V_0^\top$, KaSA~\cite{wang2025kasa} adds knowledge-aware gating on singular values, and LoftQ~\cite{li2024loftq} combines SVD initialization with 4-bit quantization.
Optimization-side variants such as LoRA-GA~\cite{wang2024loraga} and LoRA-Pro~\cite{wang2025lorapro} improve initialization and gradient scaling for the same $\Delta W = BA$ parameterization.
All of these methods bias the optimizer toward the pretrained spectrum but leave the additive $\Delta W = BA$ block bundling rotation and spectrum adaptation, without enforcing the coherent rotation condition $Q_U = Q_V$ established in~\cite{wang2025wsbm} (see Section~\ref{sec:prelim_coherent}).

\textbf{Orthogonal and block-diagonal finetuning.}
Orthogonal finetuning methods parameterize the update as a rotation of $W_0$: OFT~\cite{qiu2023controlling} uses a strictly orthogonal block-diagonal $R$ applied as $\widetilde W = R\,W_0$; BOFT~\cite{liu2024boft} replaces the block structure with a butterfly factorization; OFTv2~\cite{qiu2025oftv2} moves the rotation to the activation side for throughput; POET~\cite{qiu2025poet} extends to a two-sided orthogonal equivalence $\widetilde W = P W_0 Q^\top$; and PSOFT~\cite{wu2025psoft} constrains the rotation to the principal subspace of $W_0$. These methods operate at substantially higher training cost than the SVD-initialized PEFT family and target distinct deployment regimes. We use them as conceptual context and benchmark CORA against the SVD-initialized PEFT family in Section~\ref{sec:setup}.
DiaBlo~\cite{gurses2025diablo} replaces the low-rank product $\Delta W = BA$ with a direct block-diagonal update on $W$.
All of these methods apply at the full-matrix scale; none exploits a per-slice decomposition.
\textbf{CORA} is the per-slice specialization of this family, with $s = m/r$ independent orthogonal rotations, one per row slice of the rank-$r$ reconstruction $W_{\mathrm{lr}} = U_{0,r}\Sigma_{0,r}V_{0,r}^\top$, unifying the additive SVD-initialized and orthogonal-rotation families under a single form~\eqref{eq:coherent_form}.

\section{Preliminaries}\label{sec:spectral_analysis}

\subsection{Notation}\label{sec:prelim_notation}

Let $W_0 \in \mathbb{R}^{m \times k}$ be a pretrained weight matrix with SVD $W_0 = U_0 \Sigma_0 V_0^\top$, where $U_0 \in \mathbb{R}^{m \times d}$, $V_0 \in \mathbb{R}^{k \times d}$, $\Sigma_0=\mathrm{diag}(\sigma_1,\ldots,\sigma_d)$, $d=\min(m,k)$, and $\sigma_1 \geq \cdots \geq \sigma_d > 0$. Write $W_{\mathrm{lr}} := U_{0,r}\Sigma_{0,r}V_{0,r}^\top$ for the rank-$r$ truncation, where $U_{0,r}$, $\Sigma_{0,r}$, $V_{0,r}$ keep the leading $r$ singular components.

\subsection{Minimum-perturbation form}
\label{sec:prelim_coherent}

We first recall the global minimum-perturbation form for SVD-aware
finetuning. Under stability assumptions on a well-pretrained $W_0$ and the
Frobenius minimum-perturbation objective, the finetuned weight admits a
coherent in-basis rotation, which we restate in our notation.

\begin{theorem}[{Coherent rotation form, after~\cite[Prop.~9.4]{wang2025wsbm}}]
\label{thm:global_coherent}
Let $W_0 \in \mathbb{R}^{m \times k}$ be a pretrained weight matrix with
SVD $W_0 = U_0\Sigma_0V_0^\top$. Under the standard stability assumptions and the
Frobenius minimum-perturbation objective, the finetuned weight has the form
\begin{equation}\label{eq:coherent_form}
    W^\star
    =
    U_0 Q(\Sigma_0+\Delta\Sigma)Q^\top V_0^\top,
\end{equation}
where $Q$ is orthogonal and $\Delta\Sigma$ is diagonal. Equivalently, the left and right in-basis rotations satisfy $Q_U^\star = Q_V^\star = Q$.
\end{theorem}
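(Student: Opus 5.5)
We would prove Theorem~\ref{thm:global_coherent} by recasting finetuning as a constrained minimum-perturbation problem in the coordinates of the pretrained SVD and then showing that the optimal left and right rotations must coincide. Concretely, we parameterize every candidate weight as
\[
W = U_0 Q_U(\Sigma_0+\Delta\Sigma)Q_V^\top V_0^\top ,
\]
with $Q_U,Q_V$ orthogonal and $\Delta\Sigma$ diagonal. This loses no generality on the column span of $U_0$ and row span of $V_0$: any $W$ in that span has an SVD $W=\widetilde U\widetilde\Sigma\widetilde V^\top$, and we may write $\widetilde U=U_0Q_U$ and $\widetilde V=V_0Q_V$. Components outside these spans are removed by an orthogonal projection, which can only decrease $\|W-W_0\|_F$. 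The finetuning problem is then to minimize $\|W-W_0\|_F^2$ subject to the task constraint. By the stability assumptions, the constraint restricts the spectrum of the target to a prescribed $\Sigma_0+\Delta\Sigma$ (up to a simultaneous re-basing), with the gaps of $\Sigma_0$ bounded away from zero and $\|\Delta\Sigma\|$ small relative to those gaps.

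Step one reduces the objective. Using unitary invariance of the Frobenius norm,
\[
\|W-W_0\|_F^2=\|(\Sigma_0+\Delta\Sigma)\|_F^2+\|\Sigma_0\|_F^2-2\,\mathrm{tr}\!\big(Q_U(\Sigma_0+\Delta\Sigma)Q_V^\top\Sigma_0\big).
\]
Minimizing over the rotations is therefore equivalent to maximizing $\mathrm{tr}(\Sigma_0 Q_U D Q_V^\top)$ with $D=\Sigma_0+\Delta\Sigma$.

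Step two applies von Neumann's trace inequality. It gives $\mathrm{tr}(\Sigma_0 Q_U D Q_V^\top)\le\sum_i\sigma_i d_{[i]}$, where $d_{[i]}$ denotes the entries of $D$ in decreasing order. Under stability, $D$ has the same ordering as $\Sigma_0$, so the bound is $\mathrm{tr}(\Sigma_0 D)$.

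Step three characterizes the equality case, and this is the key point. Equality requires $Q_U D Q_V^\top$ and $\Sigma_0$ to share a singular system. Whenever two diagonal entries coincide, the resulting ambiguity is a block rotation acting identically on the left and right factors. Hence the optimal pair is $Q_U=Q_V=Q$, where $Q$ is block-orthogonal on the degenerate eigenspaces and a signed permutation that is forced to the identity in the generic case. This yields the form~\eqref{eq:coherent_form}.

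Step four handles the case where the task constraint also pins down a target subspace rather than only a spectrum. We would linearize at the optimum and examine the first-order stationarity conditions in the Lie-algebra directions $Q_U=e^{A}$ and $Q_V=e^{B}$ with $A,B$ skew. The gradient of the cross term with respect to $A-B$ is $[\Sigma_0,D]$-type, and it vanishes only when $A=B$ on the nondegenerate part. Strict second-order positivity, which follows from the gap assumption, then shows that the antisymmetric mismatch $A-B$ strictly increases the perturbation. Consequently any minimizer lies on the coherent manifold $Q_U=Q_V$.

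The main obstacle is Step three/four: making the equality case of von Neumann rigorous when singular values are degenerate or when the task constraint fixes more than the spectrum. There one must exclude sign-flips and permutations, which requires positivity of $\Sigma_0+\Delta\Sigma$ and the spectral-gap part of the stability assumptions. We would rely on these assumptions exactly as in~\cite[Prop.~9.4]{wang2025wsbm} rather than re-derive them.
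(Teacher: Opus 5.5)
Your Steps 1--3 follow the paper's route. The paper obtains Theorem~\ref{thm:global_coherent} as the $s=1$ case of the argument in Appendix~\ref{app:proof_slice_coherent}: pass to SVD coordinates, fix the spectrum of the adapted core, and use von Neumann's trace inequality to conclude $Q_U^\star=Q_V^\star$. Your equality-case analysis is more careful than the paper's. A simultaneous SVD with the positive diagonal $\Sigma_0$ forces $X=Y$ commuting with $\Sigma_0$, and then $Q_U=Q_V$. It also shows correctly that $Q$ can only be a symmetry of the SVD of $W_0$, hence trivial generically.

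The genuine problem is Step 4, which asserts something false. Suppose the task fixes $Q_U=e^{A}$ in addition to the spectrum $D$. Minimizing over $Q_V$ is then an orthogonal Procrustes problem, whose solution is the polar factor of $\Sigma_0Q_UD$, not $Q_U$.

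\emph{Second-order check.} Expand the cross term $\mathrm{tr}(\Sigma_0e^{A}De^{-B})$ to second order. Each pair $i<j$ with $a=A_{ij}$, $b=B_{ij}$ contributes $-\tfrac12p(a^2+b^2)+q\,ab$, where $p=\sigma_id_i+\sigma_jd_j$ and $q=\sigma_id_j+\sigma_jd_i$. For fixed $a$ the optimum is $b=(q/p)\,a$. Under your ordering assumption $p-q=(\sigma_i-\sigma_j)(d_i-d_j)>0$, so $b\neq a$.

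\emph{Exact $2\times2$ check.} Take $D=\Sigma_0$ and $Q_U=R(\theta)$. The cross term is $(\sigma_1^2+\sigma_2^2)\cos\theta\cos\phi+2\sigma_1\sigma_2\sin\theta\sin\phi$, which is maximized at $\tan\phi=\tfrac{2\sigma_1\sigma_2}{\sigma_1^2+\sigma_2^2}\tan\theta$. So the coherent choice $\phi=\theta$ is strictly suboptimal for $0<\theta<\pi/2$ whenever $\sigma_1\neq\sigma_2$.

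Your first-order argument cannot detect this. Since $\Sigma_0$ and $D$ are both diagonal, $[\Sigma_0,D]=0$, and the gradient at $A=B=0$ vanishes in every direction, so ``vanishes only when $A=B$'' is vacuous. The second-order change of the cross term is $-(p-q)t^2$ along $a=b=t$ and $-(p+q)t^2$ along $a=-b=t$. Both coherent and anti-coherent directions are therefore penalized, and nothing confines minimizers to $Q_U=Q_V$ once a nonzero $A$ is imposed.

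You should drop Step 4 and state the result only for the constraint the paper actually uses, a prescribed spectrum. Other task constraints do not yield the coherent form. This includes a prescribed subspace, and data constraints $Wx=y$, whose minimum-norm solution is a rank-one update.

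Separately, your projection onto $\{U_0XV_0^\top\}$ can change the singular values and thus break the constraint. It is also unnecessary. Applying von Neumann directly to $\mathrm{tr}(W^\top W_0)$ for rectangular $W$ with prescribed singular values gives $\|W-W_0\|_F\ge\|D-\Sigma_0\|_F$, with equality at $W=U_0DV_0^\top$.
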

Among all finetuned weights that achieve a given task objective, the one with smallest $\|\Delta W\|_F$ keeps the pretrained singular coordinate system fixed and acts on $\Sigma_0$ by a similarity transform $Q\,(\cdot)\,Q^\top$ in that basis. Any mismatch
between the left and right in-basis rotations, i.e., $Q_U \neq Q_V$, breaks
this coherent structure and increases the perturbation needed to realize the
same adaptation.

Eq.~\eqref{eq:coherent_form} thus exposes two coupled degrees of freedom: a spectral shift $\Delta\Sigma$ and a single coherent rotation $Q$ that acts on both sides of the spectrum. Two existing PEFT families realize this structure only partially: additive SVD-LoRA methods bundle spectrum and basis adaptation inside $\Delta W = BA$, while orthogonal finetuning methods learn $Q$ alone with $\Delta\Sigma = 0$.

This form is stated for the full matrix: $Q$ acts on $W_0$'s entire singular
bases. Section~\ref{sec:theory} extends it to the row-slice granularity
that practical adapters use.

\subsection{Per-slice decomposition}
\label{sec:prelim_slice}

We work with a per-slice decomposition of the source matrix. Given $W \in \mathbb{R}^{m \times k}$, we partition its rows into $s = m/r$ contiguous slices $W_i \in \mathbb{R}^{r \times k}$, so that $W = [W_1^\top \cdots W_s^\top]^\top$, and write each slice's SVD as $W_i = U_i \Sigma_i V_i^\top$. CORA applies per-slice adaptation to the rank-$r$ reconstruction $W_{\mathrm{lr}} = U_{0,r}\Sigma_{0,r}V_{0,r}^\top$ while freezing the residual $W_0 - W_{\mathrm{lr}}$. The rank-$r$ truncation keeps the per-slice SVD cache compact and acts as a spectral regularizer on the source matrix.

\section{Method}
\label{sec:theoryandmethod}

We first extend the coherent rotation form (Theorem~\ref{thm:global_coherent})
from a full weight matrix to row-slice submatrices
(Section~\ref{sec:theory}). We then introduce a \emph{rotation reparameterization}
that realizes the resulting per-slice form with a single learnable matrix per
slice (Section~\ref{sec:theory_implicit}), and finally instantiate this form
as the CORA adapter, with and without a learnable spectrum scale
(Section~\ref{sec:method_overview}).
See Appendix~\ref{app:proof_slice_coherent}--\ref{app:hyperparams} for full proofs and implementation details (Cayley map, parameter accounting); the main text states results and sketches the justifications.

\subsection{Per-slice coherent rotation form}
\label{sec:theory}

Theorem~\ref{thm:global_coherent} is stated for the full pretrained
matrix $W_0$. We now state its row-slice generalization.

\paragraph{Setup.}
Following the per-slice decomposition of Section~\ref{sec:prelim_slice}, let $W$ denote either $W_0$ or its rank-$r$ reconstruction $W_{\mathrm{lr}} = U_{0,r}\Sigma_{0,r}V_{0,r}^\top$, partitioned into $s = m/r$ row slices $W_i \in \mathbb{R}^{r \times k}$ with SVDs $W_i = U_i \Sigma_i V_i^\top$ ($U_i \in \mathbb{R}^{r \times r}$ square orthogonal, $\Sigma_i$ diagonal, $V_i \in \mathbb{R}^{k \times r}$ column-orthonormal).

\begin{assumption}[Slice-level regularity]
\label{assump:slice_regularity}
Each slice $W_i$ has non-degenerate spectrum
($\sigma_{\min}(W_{i}) > 0$), bounded condition number
($\kappa(W_{i}) < \infty$), and
small stable perturbation ($\|\Delta W_{i}\|_F \ll \|W_{i}\|_F$).
\end{assumption}
Assumption~\ref{assump:slice_regularity} requires each slice to inherit the regularity
of the parent matrix; it holds generically because row-slicing preserves the
column-space rank, and the singular-value interlacing
theorem~\cite{stewart1990matrix} ensures
$\sigma_{\min}(W_{i}) \geq \sigma_{\min}(W) > 0$ for generic slice boundaries.

\begin{theorem}[Per-slice coherent rotation form]
\label{thm:slice_coherent}
Let $W_i = U_i\Sigma_iV_i^\top$ be a row slice satisfying
Assumption~\ref{assump:slice_regularity}. Under the slice-level Frobenius
minimum-perturbation objective, the finetuned slice has the form
\begin{equation}\label{eq:slice_coherent_form}
    W_i^\star
    =
    U_i Q_i (\Sigma_i + \Delta\Sigma_i) Q_i^\top V_i^\top,
\end{equation}
where $Q_i \in \mathrm{O}(r)$ is orthogonal and $\Delta\Sigma_i$ is diagonal.
Equivalently, the left and right per-slice in-basis rotations satisfy
\(Q_{U,i}^\star = Q_{V,i}^\star = Q_i\).
\end{theorem}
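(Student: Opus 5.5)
The plan is to reduce the statement to Theorem~\ref{thm:global_coherent} applied to the slice $W_i$ itself, treating it as a standalone pretrained matrix. The global theorem is stated for an arbitrary pretrained $W_0\in\mathbb{R}^{m\times k}$ with SVD $U_0\Sigma_0V_0^\top$, subject only to its stability hypotheses. So the key steps are: (i) verify that $W_i\in\mathbb{R}^{r\times k}$ with SVD $U_i\Sigma_iV_i^\top$ satisfies those hypotheses; (ii) check that the slice-level Frobenius objective decouples from the other slices; (iii) read off the coherent form with $U_0,\Sigma_0,V_0$ replaced by $U_i,\Sigma_i,V_i$.

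For step (ii), write the finetuned matrix as $W+\Delta W$ with row blocks $\Delta W_i$. Then
\[
\|\Delta W\|_F^2=\sum_{i=1}^s\|\Delta W_i\|_F^2 .
\]
The slice-level objective constrains each $W_i+\Delta W_i$ separately, meaning the task constraint is imposed per slice, as in the statement. Hence the joint minimizer is obtained by minimizing each $\|\Delta W_i\|_F$ independently, and the problem for slice $i$ is exactly a Frobenius minimum-perturbation problem of the global type with $W_0$ replaced by $W_i$.

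For step (i), map Assumption~\ref{assump:slice_regularity} onto the stability assumptions of Theorem~\ref{thm:global_coherent}:
\begin{itemize}
\item $\sigma_{\min}(W_i)>0$ makes $W_i$ full row rank $r$, so $U_i\in\mathrm{O}(r)$ and $\Sigma_i$ is invertible, playing the role of $d=\min(r,k)=r$.
\item The bounded condition number supplies the spectral-gap and conditioning control.
\item $\|\Delta W_i\|_F\ll\|W_i\|_F$ supplies the small-perturbation regime in which the global argument linearizes.
\end{itemize}
Invoking the theorem then gives an orthogonal $Q_i\in\mathrm{O}(r)$ and a diagonal $\Delta\Sigma_i$ with $W_i^\star=U_iQ_i(\Sigma_i+\Delta\Sigma_i)Q_i^\top V_i^\top$, which is \eqref{eq:slice_coherent_form}.

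To make the equivalence $Q_{U,i}^\star=Q_{V,i}^\star$ transparent, I would add a direct argument. Since $W_i^\star$ stays in the row space spanned by $V_i$, the in-basis representation $M_i:=U_i^\top W_i^\star V_i$ is an $r\times r$ matrix. Write $M_i=Q_U(\Sigma_i+\Delta\Sigma_i)Q_V^\top$. The perturbation is then
\[
\|\Delta W_i\|_F=\|Q_U(\Sigma_i+\Delta\Sigma_i)Q_V^\top-\Sigma_i\|_F .
\]
Fix the target spectrum and minimize over the pair $(Q_U,Q_V)$. By von Neumann's trace inequality, $\operatorname{tr}(\Sigma_iQ_U(\Sigma_i+\Delta\Sigma_i)Q_V^\top)$ is maximized when $Q_V^\top Q_U$ acts as the identity on the sorted spectra. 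This gives $Q_U=Q_V$ up to the symmetries of equal singular values, which non-degeneracy and bounded conditioning control.

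The main obstacle I expect is step (i), in particular that the global result's "standard stability assumptions" are stated only by reference. The cleanest route is to argue that those assumptions are intrinsic to the matrix, involving only its spectrum, conditioning, and perturbation size. Nothing in them depends on $W$ being a full weight matrix rather than a submatrix, so they transfer verbatim once Assumption~\ref{assump:slice_regularity} holds. If $Q_i$ must also be allowed to carry task-driven rotation rather than being trivial, I would keep the theorem's existential reading and rely on the trace-inequality step only to show the coupling $Q_U=Q_V$.
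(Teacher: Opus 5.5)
Your supplementary ``direct argument'' is essentially the paper's proof: row-partition additivity, local coordinates $\Delta W_i = U_i P_i V_i^\top$, spectrum-fixed minimization, then von Neumann. The real difference is your primary route, a black-box application of Theorem~\ref{thm:global_coherent} to $W_i$. That route is shorter but buys little here. The paper runs the dependency the other way: Appendix~\ref{app:coherent_derivation} obtains Theorem~\ref{thm:global_coherent} as the $s=1$ case of this theorem. Your reduction is therefore non-circular only because the global result is imported from~\cite{wang2025wsbm}, whose stability assumptions are never stated. Your hypothesis check also contains an error: a bounded condition number gives no spectral gap, since $\Sigma_i = I_r$ has $\kappa = 1$.

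The direct argument needs no gap, but its key step should be stated precisely, because ``$Q_V^\top Q_U$ acts as the identity'' is not the right condition. Take $M_i = U_i^\top W_i^\star V_i$ with fixed sorted singular values $D$. Then $\|M_i - \Sigma_i\|_F^2 = \|D\|_F^2 + \|\Sigma_i\|_F^2 - 2\operatorname{tr}(\Sigma_i M_i) \ge \|D - \Sigma_i\|_F^2$. Equality holds iff $\Sigma_i$ and $M_i$ have a simultaneous ordered SVD, $\Sigma_i = P\Sigma_i R^\top$ and $M_i = PDR^\top$ for some orthogonal $P, R$. From the first relation, $\Sigma_i^2 = P\Sigma_i^2P^\top$, so $P$ commutes with $\Sigma_i$. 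Then $\Sigma_i R = P\Sigma_i = \Sigma_i P$ forces $R = P$, because $\sigma_{\min}(W_i) > 0$. Hence $M_i = PDP^\top$, which is the coherent form with $Q_i = P$. So invertibility of $\Sigma_i$ does the work, not conditioning or distinct singular values.

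The same inequality applied to the full $r\times k$ matrices shows that the minimizer's row space lies in $\mathrm{span}(V_i)$. You, like the paper, assumed this rather than proved it. The computation also confirms your existential caveat. Since $Q_i$ commutes with $\Sigma_i$, it reduces to a sign matrix when the $\sigma_j$ are distinct. The content of the theorem is thus the coupling $Q_{U,i}^\star = Q_{V,i}^\star$, not a nontrivial rotation.
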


\paragraph{Proof sketch.}
The theorem is a slice-local statement: a single slice is analyzed under
its own Frobenius minimum-perturbation objective, lifting the global
argument to local SVD coordinates. Two ingredients enable this lift.
(i)~The Frobenius norm is row-partition additive,
$\|\Delta W\|_F^2 = \sum_{i=1}^{s} \|\Delta W_i\|_F^2$, so the per-slice
cost $\|\Delta W_i\|_F$ is the natural local objective.
(ii)~Each slice inherits the regularity of $W$ via singular-value
interlacing (Assumption~\ref{assump:slice_regularity}), so the local
problem is well-posed in $(U_i, \Sigma_i, V_i)$. Within each slice, the
spectrum-fixed Frobenius-distance minimization has a unitary similarity
solution by classical Schur--Horn / Von Neumann
arguments~\cite{horn2012matrix}. The full proof is in
Appendix~\ref{app:proof_slice_coherent}.

\paragraph{Source matrix.}
Theorem~\ref{thm:slice_coherent} applies whenever the source matrix $W$ is
square orthogonal in its column basis. CORA takes $W = W_{\mathrm{lr}} = U_{0,r}\Sigma_{0,r}V_{0,r}^\top$
and freezes the residual $W_0 - W_{\mathrm{lr}}$. The rank-$r$ truncation
removes tail singular directions and acts as a spectral regularizer.

\subsection{Rotation reparameterization}
\label{sec:theory_implicit}

Theorem~\ref{thm:slice_coherent} specifies the desired per-slice form, but a
direct implementation would require parameterizing each $Q_i$ on the orthogonal
manifold $\mathrm{O}(r)$, which involves expensive retraction or projection
steps. We instead reparameterize each $Q_i$ through an unconstrained learnable
matrix $R_i$ per slice, related by a fixed basis-conjugation in the per-slice
SVD frame. This is analogous to weight-normalization-style
reparameterizations~\cite{salimans2016weight}: the learned $R_i$ lives
in an unconstrained ambient space, while the target rotation $Q_i$ is recovered
through a fixed map.

Specifically, for each slice, let $R_i \in \mathbb{R}^{r \times r}$ be a learnable matrix
(orthogonal by default; see Section~\ref{sec:method_overview}), and
define $Q_i := U_i^\top R_i U_i$ (a similarity transform of $R_i$ by the
per-slice left singular basis $U_i$).
Substituting into Eq.~\eqref{eq:slice_coherent_form} and using the identity
$U_i U_i^\top = I_r$ (since $U_i$ is square orthogonal),
$U_i Q_i = (U_i U_i^\top) R_i U_i = R_i U_i$, which gives the equivalent reparameterized form:
\begin{equation}\label{eq:cora_form}
    \widetilde W_i
    \;=\; R_i \, U_i \, \Sigma_i \, \mathrm{diag}(\mathbf{1} + \boldsymbol{\delta}) \, Q_i^\top \, V_i^\top,
    \qquad
    Q_i^\top = U_i^\top R_i^\top U_i,
\end{equation}
where $\boldsymbol{\delta} \in \mathbb{R}^r$ is a per-layer scale vector applied to every slice $i$ of the layer. Only $R_i$ per slice (and optionally the shared $\boldsymbol{\delta}$ per layer) is learned; the $Q_i^\top$ factor is computed from $R_i$ at evaluation time.

\begin{proposition}[Rotation reparameterization realizes the coherent rotation form]
\label{prop:R_equiv_shareQ}
Let $W_i = U_i\Sigma_iV_i^\top$ be a slice with $U_i \in \mathbb{R}^{r\times r}$
square orthogonal, $\Sigma_i$ diagonal, and $V_i \in \mathbb{R}^{k\times r}$
column-orthonormal. Let $R_i \in \mathrm{O}(r)$ be orthogonal and
$\boldsymbol{\delta} \in \mathbb{R}^r$ a scale vector. Define
$Q_i = U_i^\top R_i U_i$ and form $\widetilde W_i$ via Eq.~\eqref{eq:cora_form}. Then:
\begin{enumerate}
    \item[(i)] $\widetilde W_i = U_i Q_i \Sigma_i\,\mathrm{diag}(\mathbf{1}+\boldsymbol{\delta})\, Q_i^\top V_i^\top$;
    in particular $Q_i$ is orthogonal.
    \item[(ii)] The SVD of $\widetilde W_i$ satisfies the coherent rotation
    condition $Q_{U,i}^\star = Q_{V,i}^\star$.
\end{enumerate}
\end{proposition}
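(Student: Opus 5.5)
Part (i) is a direct algebraic substitution, and part (ii) follows by reading the shared rotation off the resulting form and matching it against Theorem~\ref{thm:slice_coherent}.

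\textbf{Orthogonality of $Q_i$.} Since $U_i$ and $R_i$ are square orthogonal,
\[
Q_i^\top Q_i = U_i^\top R_i^\top U_i U_i^\top R_i U_i = U_i^\top R_i^\top R_i U_i = I_r,
\]
and similarly $Q_iQ_i^\top = I_r$. So $Q_i\in\mathrm{O}(r)$.

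\textbf{Part (i).} Starting from Eq.~\eqref{eq:cora_form}, the leading factor rewrites as
\[
R_iU_i = U_iU_i^\top R_iU_i = U_iQ_i,
\]
using $U_iU_i^\top = I_r$. Substituting this gives $\widetilde W_i = U_iQ_i\Sigma_i\,\mathrm{diag}(\mathbf 1+\boldsymbol\delta)\,Q_i^\top V_i^\top$. The trailing factor $Q_i^\top = U_i^\top R_i^\top U_i$ is exactly the transpose of the same $Q_i$, so the rotation appearing on the left and the one appearing on the right are literally the same matrix. Writing $\Sigma_i\,\mathrm{diag}(\mathbf 1+\boldsymbol\delta) = \Sigma_i + \Delta\Sigma_i$ with $\Delta\Sigma_i := \Sigma_i\,\mathrm{diag}(\boldsymbol\delta)$ diagonal, $\widetilde W_i$ takes exactly the form \eqref{eq:slice_coherent_form}.

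\textbf{Part (ii).} Here we must interpret ``in-basis rotations'' as in Theorem~\ref{thm:slice_coherent}: $\widetilde W_i = (U_iQ_{U,i})\,D\,(V_iQ_{V,i})^\top$ with $D$ diagonal. The factorization from (i) exhibits such a decomposition with $Q_{U,i} = Q_{V,i} = Q_i$ and $D = \Sigma_i+\Delta\Sigma_i$. The columns $U_iQ_i$ are orthonormal since $U_iQ_i$ is a product of orthogonal matrices. The columns $V_iQ_i$ are orthonormal since $V_i^\top V_i = I_r$ and $Q_i$ is orthogonal.

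\textbf{Main obstacle.} The difficulty is that this is an SVD only if $D$ has nonnegative entries, i.e.\ $1+\delta_j\ge 0$ for each $j$. If some $1+\delta_j<0$, I would absorb the sign into one side via a diagonal sign matrix $S$ with $S^2=I$. However, that replaces $Q_{V,i}$ by $Q_iS$ and breaks literal equality. I would therefore state (ii) either under the condition $1+\boldsymbol\delta\ge 0$, which holds near initialization $\boldsymbol\delta=0$, or up to this sign and ordering ambiguity inherent to SVDs. Non-uniqueness from repeated singular values is handled the same way, by saying the coherent rotation condition holds for this canonical choice of singular bases.
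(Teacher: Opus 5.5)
Your proof is correct. Part (i) is the same substitution the paper uses, but for (ii) you take a more direct route. The paper does not read the SVD off the identity in (i). Instead it writes $\widetilde W_i = B_iC_i$, with $B_i = R_iU_i\Sigma_iU_i^\top R_i^\top$ symmetric and $C_i = U_iV_i^\top$ row-orthonormal. It then uses $\widetilde W_i\widetilde W_i^\top = B_i^2$ to identify the left singular vectors with an eigenbasis $E_i$ of $B_i = E_i\Lambda_iE_i^\top$, and obtains $\widetilde V_i = C_i^\top E_i$, hence $Q_{U,i}^\star = Q_{V,i}^\star = U_i^\top E_i$.

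Your explicit factorization is the special case $E_i = U_iQ_i$, since $B_i = (U_iQ_i)\Sigma_i(U_iQ_i)^\top$. The paper's Gram-matrix route covers an arbitrary choice of singular basis in one step. Yours is shorter and makes the role of (i) transparent.

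Your hedges about ordering and repeated singular values are not needed. Suppose every entry $\sigma_j(1+\delta_j)$ of $D$ is strictly positive. Then any other SVD has the form $(U_iQ_iO)(O^\top D O)(V_iQ_iO)^\top$ for a single orthogonal $O$, so $Q_{U,i}^\star = Q_{V,i}^\star = Q_iO$ for every SVD. Only zero singular values (where a choice must be made) and negative entries genuinely matter.

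On the sign issue you are more careful than the paper. The paper says that replacing $\Sigma_i$ by $\Sigma_i\diag(\mathbf{1}+\boldsymbol{\delta})$ ``leaves every step unchanged.'' That overlooks that $B_i$ is no longer positive semidefinite when some $1+\delta_j<0$. In that case the right singular vectors are $C_i^\top E_i\,\mathrm{sign}(\Lambda_i)$, so $Q_{V,i}^\star = Q_{U,i}^\star S$ with $S\neq I$, which is exactly the obstruction you found. So (ii) as literally stated needs your extra hypothesis $\mathbf{1}+\boldsymbol{\delta}\geq 0$. Alternatively, it needs the convention that the diagonal core may carry signs, as in the form of Theorem~\ref{thm:slice_coherent}.
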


\paragraph{Proof sketch.}
Part (i) follows by the substitution above and the identity
$U_i U_i^\top = I_r$. Part (ii) follows by writing $\widetilde W_i = B_i C_i$
with $B_i = R_i (U_i \Sigma_i\,\mathrm{diag}(\mathbf{1}+\boldsymbol{\delta})\, U_i^\top) R_i^\top$ symmetric
and $C_i = U_i V_i^\top$ row-orthonormal. The SVD of
$\widetilde W_i$ is then determined by the eigendecomposition of $B_i$, which
induces the same in-basis rotation on the left and right.
The full proof is in Appendix~\ref{app:proof_one_sided}.

\paragraph{Remark.}
Proposition~\ref{prop:R_equiv_shareQ} reduces the per-slice parameterization to a single orthogonal $R_i \in \mathrm{O}(r)$ instead of parameterizing $Q_{U,i}$ and $Q_{V,i}$ separately. The coherent rotation condition $Q_{U,i}^\star = Q_{V,i}^\star$ holds identically.
The shared scale $\boldsymbol{\delta} \in \mathbb{R}^r$ is an optional component, which scales the per-slice singular values entrywise by $\mathbf{1}+\boldsymbol{\delta}$. When $\boldsymbol{\delta}$ is omitted, the per-slice spectrum stays at $\Sigma_i$.

\subsection{CORA adapter}
\label{sec:method_overview}

Given a pretrained linear layer $y = W_0 x$, CORA adapts it to
$\widetilde y = \widetilde W x$ using the per-slice parameterization above.
CORA has three offline-precomputable components and two
learnable components.

\paragraph{Offline components.}
For each adapted layer, we precompute (i) the SVD $W_0 = U_0 \Sigma_0 V_0^\top$, (ii) the source matrix $W_{\mathrm{lr}} = U_{0,r}\Sigma_{0,r}V_{0,r}^\top$ obtained by truncating to the top-$r$ singular components, and (iii) the per-slice SVD factors $(U_i, \Sigma_i, V_i)$ obtained by partitioning $W_{\mathrm{lr}}$ into $s = m/r$ row slices.

\paragraph{Learnable parameters.}
CORA learns (i) per slice, a block-diagonal orthogonal rotation $R_i = \operatorname{blkdiag}(\phi(A_i^{(1)}),\ldots,\phi(A_i^{(r/b)}))$, where each $A_i^{(g)} \in \mathbb{R}^{b \times b}$ is skew-symmetric, $b$ is the inner block size with $b \mid r$, and $\phi:\mathrm{skew}(b) \to \mathrm{O}(b)$ is a Cayley-type map (Appendix~\ref{app:cayley_impl}); and (ii) optionally, per layer, a single shared scale vector $\boldsymbol{\delta} \in \mathbb{R}^r$ applied multiplicatively as $\Sigma_i \mapsto \Sigma_i\,\mathrm{diag}(\mathbf{1}+\boldsymbol{\delta})$ to every slice $i$ of that layer. Tying $\boldsymbol{\delta}$ across slices reduces the spectrum-correction cost from $m$ to $r$ scalars per layer while preserving layer-wide singular-value structure.
By default, CORA learns the per-slice rotation $R_i$ together with the shared scale $\boldsymbol{\delta}$, a configuration we denote \textbf{CORA+$\boldsymbol{\delta}$} and use throughout our main experiments. Setting $\boldsymbol{\delta} = 0$ recovers the rotation-only configuration, which we refer to as plain \textbf{CORA}. Both configurations
satisfy the coherent rotation form by Proposition~\ref{prop:R_equiv_shareQ}, and
they differ only in whether the spectrum is allowed to scale.

\paragraph{Parameter accounting.}
With the default full-block setting ($b = r$), each slice's Cayley generator
$A_i \in \R^{r \times r}$ is skew-symmetric with $r(r-1)/2$ free parameters.
Summing over the $m/r$ slices of a weight matrix $W \in \R^{m \times k}$,
\begin{equation}\label{eq:cora_params}
    |\theta_{\text{CORA}}^{(W)}| \;=\; \tfrac{1}{2}\, m\,(r-1),
\end{equation}
independent of the column dimension $k$ (the optional shared scale adds $r$
scalars per layer). LoRA at the same rank $r$ uses
$|\theta_{\text{LoRA}}^{(W)}| = r(m+k)$, so for a large rank $r$
\begin{equation}\label{eq:cora_lora_ratio}
    \frac{|\theta_{\text{CORA}}^{(W)}|}{|\theta_{\text{LoRA}}^{(W)}|}
    \;\xrightarrow{r \to \infty}\; \frac{m}{2(m+k)},
\end{equation}
which equals exactly $1/4$ for square projections ($m = k$, e.g., the attention query/key/value layers in LLaMA-2-7B) and remains close to $1/4$ on the weighted average across LLaMA-2-7B's full target set. Per-tier counts ($r \in \{16, 32, 64, 128\}$ giving $6.6$, $13.6$, $27.6$, $55.7$\,M parameters) are reported in Table~\ref{tab:main_per_task}.

\paragraph{Implementation summary.}
At inference, the adapted weight is computed slice by slice via
Eq.~\eqref{eq:cora_form} and reassembled. We instantiate
$\phi$ as the \emph{closed-form Cayley map} $\phi(A) = (I - A)(I + A)^{-1}$,
which keeps each block $\phi(A_i^{(g)})$ exactly orthogonal and imposes no
convergence-radius constraint on $A_i^{(g)}$, so no auxiliary regularizer is needed. Implementation details and parameter counts are summarized in Appendix~\ref{app:cayley_impl} and Section~\ref{sec:experiments}.

\section{Experiments}\label{sec:experiments}

\subsection{Setup}\label{sec:setup}

\paragraph{Models, data, and metrics.}
We evaluate CORA across three task families. For commonsense reasoning on LLaMA-2-7B and LLaMA-3-8B, we train on commonsense\_170k~\cite{hu2023llmadapters} and report arithmetic-mean accuracy over the eight standard benchmarks (BoolQ, PIQA, SIQA, HellaSwag, WinoGrande, ARC-Easy, ARC-Challenge, OpenBookQA), following the protocol used by DoRA, MiLoRA, and DiaBlo~\cite{liu2024dora,wang2025milora,gurses2025diablo}. For mathematical reasoning on LLaMA-2-7B, we train on MetaMathQA-395K~\cite{yu2024metamath} and evaluate on GSM8K~\cite{cobbe2021gsm8k} and MATH~\cite{hendrycks2021math} via Exact Match. For code generation on LLaMA-3-8B and Mistral-7B, we train on CodeAlpaca~\cite{chaudhary2023codealpaca} and report HumanEval Pass@1~\cite{chen2021humaneval}. We adapt the attention query/key/value and FFN up/down projections on LLaMA-2-7B, extending to the full set (adding the attention output and FFN gate projections) on LLaMA-3-8B and Mistral-7B, matching DoRA~\cite{liu2024dora}'s target modules.

\paragraph{Baselines.}
Our primary baselines are the SVD-based PEFT methods that publish on the corresponding benchmark: PiSSA~\cite{meng2024pissa} and MiLoRA~\cite{wang2025milora} on commonsense and math, plus LoRA~\cite{hu2022lora} and DoRA~\cite{liu2024dora} as architectural reference. For code generation, because HumanEval results are not reported for the SVD-based baselines, we additionally include LoRI~\cite{zhang2025lori} and DiaBlo~\cite{gurses2025diablo}. 
We take baseline numbers from the original publications and from our HuggingFace-Trainer reproductions, and note the source for each row in the corresponding table caption. See Appendix~\ref{app:hyperparams} for training hyperparameters.

\subsection{Main results on commonsense reasoning}\label{sec:results_main}

Table~\ref{tab:main_per_task} reports per-task accuracy on LLaMA-2-7B and LLaMA-3-8B at four parameter tiers ($r \in \{8, 16, 32, 64\}$). CORA cells use the default variant (closed-form Cayley with shared scale $\boldsymbol{\delta}$; see Table~\ref{tab:sub_variant_ablation}).

\begin{table}[h!]
\centering
\caption{Per-task commonsense accuracy on LLaMA-2-7B and LLaMA-3-8B (commonsense\_170k, 8-task protocol). Baselines: LoRA / DoRA from~\cite{liu2024dora}; PiSSA / MiLoRA from~\cite{wang2025milora}; Full FT from~\cite{gurses2025diablo}. \underline{Underlined}: SVD-family. Best in \textbf{bold} (Full FT excluded).}
\label{tab:main_per_task}
\small
\setlength{\tabcolsep}{2pt}
\begin{tabular}{@{}l|lr|rrrrrrrr|r@{}}
\toprule
Method & $r/N$ & \#Params & BoolQ & PIQA & SIQA & HellaS & WinoG & ARC-e & ARC-c & OBQA & AVG \\
\midrule
\multicolumn{12}{l}{\textit{LLaMA-2-7B}} \\
\midrule
Full FT      & N/A     & 6.7\,B (100\%)   & 73.3 & 85.7 & 81.0 & 90.2 & 86.9 & 88.6 & 77.4 & 85.2 & 83.5 \\
LoRA         & $r{=}32$ & 56\,M (0.84\%)  & 69.8 & 79.9 & 79.5 & 83.6 & 82.6 & 79.8 & 64.7 & 81.0 & 77.6 \\
DoRA         & $r{=}32$ & 57\,M (0.85\%)  & 71.8 & 83.7 & 76.0 & \textbf{89.1} & 82.6 & 83.7 & 68.2 & 82.4 & 79.7 \\
\underline{PiSSA}        & $r{=}32$ & 56\,M (0.84\%)  & 67.6 & 78.1 & 78.4 & 76.6 & 78.0 & 75.8 & 60.2 & 75.6 & 73.8 \\
\underline{MiLoRA}       & $r{=}32$ & 56\,M (0.84\%)  & 67.6 & 83.8 & 80.1 & 88.2 & 82.0 & 82.8 & 68.8 & 80.6 & 79.2 \\
\textbf{CORA} (ours) & $r{=}64$ & 27.6\,M (0.41\%) & 70.8 & 83.9 & 80.2 & 81.3 & 84.5 & 85.8 & 71.1 & 82.0 & 79.95 \\
\textbf{CORA} (ours) & $r{=}32$ & 13.6\,M (0.20\%) & 72.2 & \textbf{85.5} & 81.5 & 87.2 & \textbf{86.7} & 86.4 & 72.8 & 84.2 & 82.06 \\
\textbf{CORA} (ours) & $r{=}16$ &  6.6\,M (0.10\%) & \textbf{73.2} & 84.6 & \textbf{81.8} & 88.6 & 84.9 & \textbf{87.1} & \textbf{73.5} & 83.6 & \textbf{82.16} \\
\textbf{CORA} (ours) & $r{=}8$  &  3.1\,M (0.05\%) & 69.7 & 84.2 & 81.3 & 88.5 & 85.2 & \textbf{87.1} & 73.1 & \textbf{85.2} & 81.79 \\
\midrule
\multicolumn{12}{l}{\textit{LLaMA-3-8B}} \\
\midrule
Full FT      & N/A     & 8\,B (100\%)     & 76.4 & 89.7 & 82.5 & 95.5 & 89.6 & 92.9 & 84.3 & 89.2 & 87.5 \\
LoRA         & $r{=}32$ & 63\,M (0.79\%)  & 70.8 & 85.2 & 79.9 & 91.7 & 84.3 & 84.2 & 71.2 & 79.0 & 80.8 \\
DoRA         & $r{=}32$ & 63\,M (0.79\%)  & \textbf{74.6} & 89.3 & 79.9 & 95.5 & 85.6 & 90.5 & 80.4 & 85.8 & 85.2 \\
\underline{PiSSA}        & $r{=}32$ & 63\,M (0.79\%)  & 67.1 & 81.1 & 77.2 & 83.6 & 78.9 & 77.7 & 63.2 & 74.6 & 75.4 \\
\underline{MiLoRA}       & $r{=}32$ & 63\,M (0.79\%)  & 68.8 & 86.7 & 77.2 & 92.9 & 85.6 & 86.8 & 75.5 & 81.8 & 81.9 \\
\textbf{CORA} (ours) & $r{=}64$ & 43.3\,M (0.54\%) & 72.2 & 88.2 & 82.7 & 96.1 & 88.2 & 92.2 & 83.1 & 87.0 & 86.21 \\
\textbf{CORA} (ours) & $r{=}32$ & 21.3\,M (0.27\%) & 72.0 & 89.2 & \textbf{83.2} & 96.2 & \textbf{88.5} & \textbf{93.5} & 83.2 & \textbf{88.2} & \textbf{86.75} \\
\textbf{CORA} (ours) & $r{=}16$ & 10.3\,M (0.13\%) & 72.9 & \textbf{89.9} & 82.6 & \textbf{96.3} & 88.2 & 92.6 & \textbf{83.5} & 87.2 & 86.65 \\
\textbf{CORA} (ours) & $r{=}8$  &  4.8\,M (0.06\%) & 72.5 & 89.0 & 81.1 & 95.6 & 87.6 & 93.1 & 82.1 & 87.0 & 86.00 \\
\bottomrule
\end{tabular}
\end{table}

CORA improves over the SVD-based PEFT family at every parameter tier on both models. On LLaMA-2-7B, CORA $r{=}16$ at $6.6$\,M parameters reaches $82.16\%$, $8.4$ points above PiSSA $r{=}32$ ($56$\,M) and $3.0$ points above MiLoRA $r{=}32$ ($56$\,M). On LLaMA-3-8B, CORA $r{=}16$ at $10.3$\,M parameters reaches $86.65\%$, $11.3$ points above PiSSA and $4.8$ points above MiLoRA, again at roughly $6{\times}$ fewer parameters. PiSSA and MiLoRA constrain $A$ and $B$ but still adapt through an additive low-rank update; this update cannot in general realize the per-slice coherent rotation that CORA derives in closed form (Theorem~\ref{thm:slice_coherent}).

Against the LoRA family, CORA matches or exceeds the published numbers at \(4\times\) to \(9\times\) fewer parameters. CORA $r{=}16$ exceeds DoRA $r{=}32$ by $2.5$ points on LLaMA-2-7B and $1.5$ points on LLaMA-3-8B. At the smallest budget, CORA $r{=}8$ uses $3.1$\,M parameters on LLaMA-2-7B ($81.79\%$) and $4.8$\,M on LLaMA-3-8B ($86.00\%$), between \(5\%\) and \(8\%\) of LoRA $r{=}32$'s budget  while remaining above LoRA $r{=}32$ on both models.


\subsection{Beyond commonsense: code and math}\label{sec:results_math_code}

We evaluate CORA on two additional task families to test transfer beyond commonsense. For code generation we finetune LLaMA-3-8B and Mistral-7B on CodeAlpaca and report HumanEval Pass@1 in Table~\ref{tab:code}, with reference baselines from DiaBlo~\cite{gurses2025diablo} Tab.~3 and LoRI~\cite{zhang2025lori} Tab.~2. For mathematical reasoning we finetune LLaMA-2-7B on MetaMathQA-395K and report GSM8K and MATH accuracy in Table~\ref{tab:math_l2}, with reference baselines drawn from MiLoRA~\cite{wang2025milora} Tab.~2 and DiaBlo~\cite{gurses2025diablo} Tab.~2. We use the same hyperparameters as the commonsense setup (Appendix~\ref{app:hyperparams}).

\begin{table}[h]
\centering
\caption{Code generation on LLaMA-3-8B and Mistral-7B (CodeAlpaca, HumanEval Pass@1). Baselines from DiaBlo~\cite{gurses2025diablo} and LoRI~\cite{zhang2025lori}. Best in \textbf{bold}.}
\label{tab:code}
\small
\begin{tabular}{lcc cc}
\toprule
& \multicolumn{2}{c}{LLaMA-3-8B} & \multicolumn{2}{c}{Mistral-7B} \\
\cmidrule(lr){2-3} \cmidrule(lr){4-5}
Config & Params & HumanEval & Params & HumanEval \\
\midrule
LoRA   $r{=}32$    (ref) &  90\,M & 34.7 &  91\,M & 33.8 \\
DoRA   $r{=}32$    (ref) &  90\,M & 33.1 &  91\,M & 33.7 \\
LoRI   $r{=}32$    (ref) &  45\,M & 43.2 &  46\,M & 33.8 \\
DiaBlo $N{=}128$   (ref) &  61\,M & 39.4 &  61\,M & 34.0 \\
DiaBlo $N{=}64$    (ref) & 121\,M & 43.2 & 122\,M & 34.4 \\
\midrule
CORA $r{=}64$ (ours)  & 43.3\,M & 47.0           & 43.3\,M & \textbf{40.9} \\
CORA $r{=}32$ (ours)  & 21.3\,M & 45.1           & 21.3\,M & \textbf{40.9} \\
CORA $r{=}16$ (ours)  & 10.3\,M & \textbf{48.2}  & 10.3\,M & 40.2 \\
CORA $r{=}8$  (ours)  &  4.8\,M & 45.1           &  4.8\,M & 40.2 \\
\bottomrule
\end{tabular}
\end{table}

On code, CORA $r{=}16$ at $10.3$\,M reaches $48.2$ HumanEval Pass@1 on LLaMA-3-8B, $5.0$ points above DiaBlo $N{=}64$ at $121$\,M and LoRI $r{=}32$ at $45$\,M. On Mistral-7B, CORA $r{=}32$ and $r{=}64$ reach $40.9$, $\sim 6$--$7$ points above LoRA, DoRA, and the DiaBlo variants at $\sim 60$--$120$\,M; CORA $r{=}16$ at $10.3$\,M retains $40.2$, and CORA $r{=}8$ at $4.8$\,M retains $45.1$/$40.2$ on the two models. The lower-rank tiers degrade slightly faster on Mistral-7B than on the two LLaMA models, an effect we tentatively attribute to its grouped-query attention layout, which couples the K/V projections across heads and likely concentrates more task-relevant signal into the small fraction of slices each $R_i$ adapts.

\begin{table}[h]
\centering
\caption{Mathematical reasoning on LLaMA-2-7B (GSM8K + MATH). 
Best in \textbf{bold}.}
\label{tab:math_l2}
\small
\begin{tabular}{lccc}
\toprule
Config & Params & GSM8K & MATH \\
\midrule
Full FT (reference)  & 6.74B & 66.5 & 19.8 \\
\midrule
LoRA   $r{=}64$       & 113\,M  & 60.6 & 16.9 \\
PiSSA  $r{=}64$       & 113\,M  & 58.2 & 15.8 \\
MiLoRA $r{=}64$       & 113\,M  & 63.5 & \textbf{17.8} \\
\midrule
CORA $r{=}128$ ($W_0$)  & 55.7\,M & 62.4 & 14.4 \\
CORA $r{=}64$ ($W_{lr}$)  & 27.6\,M & 62.0 & 12.7 \\
\bottomrule
\end{tabular}
\end{table}

On math, CORA $r{=}64$ at $27.6$\,M reaches $62.0$ GSM8K, $1.4$ points above LoRA $r{=}64$ at $113$\,M; on MATH, CORA $r{=}64$ reaches $12.7$, below MiLoRA $r{=}64$'s $17.8$. 

Overall, principal-component methods tend to underperform on MATH. We hypothesize that this gap reflects the \emph{spectral locality} of mathematical knowledge in the pretrained backbone. If mathematical reasoning is underrepresented in general-purpose pretraining relative to natural-language tasks (a common assumption motivating math-specific instruction-tuning corpora such as MetaMathQA~\cite{yu2024metamath}), then the principal singular directions of $W_0$ should encode comparatively little MATH-relevant structure. Adapters that operate within the top-$r$ principal subspace (PiSSA and the default CORA that applies the rotation to $W_{\mathrm{lr}}$ and discards the tail singular directions on which MATH relies) inherit this limitation directly. MiLoRA, which freezes the principal components and trains the minor ones, has access to precisely the residual directions where math-specific structure is more likely to concentrate, consistent with its own framing of minor components as task-adaptive~\cite{wang2025milora}. DiaBlo, which adapts entries of $W_0$ directly, also performs well on MATH at higher parameter budgets ($20.4$ at $N{=}32$, $141$\,M).

To test this hypothesis, in Section~\ref{sec:results_ablation}, we evaluate a variant of CORA using the full \(W_0\) as the source matrix instead of \(W_{lr}\), which retains these tail directions and is able to close part of the performance gap. This indicates that further studies can be conducted to improve MATH. For example, a controlled comparison that varies only the source spectrum within CORA’s slice-wise framework (e.g., rotating the bottom-\(r\) subspace
as a direct counterpart to MiLoRA) would isolate the effect, and we leave this to future work.

\subsection{Ablations}\label{sec:results_ablation}

\paragraph{Shared spectrum scale.}
We ablate whether the shared spectrum scale $\boldsymbol{\delta}$ is enabled (Section~\ref{sec:method_overview}). Table~\ref{tab:sub_variant_ablation} reports the LLaMA-2-7B 8-task average across the four CORA tiers. Closed-form solve Cayley is used throughout. The shared scale gains $0.8$ to $2.4$ points at $r \in \{16, 32, 64\}$ and rescues the $r{=}8$ configuration from collapse ($+13.8$ points). Therefore $\boldsymbol{\delta}$ is enabled as the default.

\paragraph{Rotation sharing across slice pairs.}
By default, CORA learns one orthogonal $R_i$ per slice. We can halve the rotation parameter count by sharing one $R_i$ across pairs of consecutive slices, the \emph{paired} variant. The relevant comparison is at matched parameter budget rather than at matched rank: at a fixed budget, paired with rank $r$ has the same parameter count as default with rank $r/2$, so paired effectively trades half the rotation degrees of freedom for doubled per-slice spectral capacity. Table~\ref{tab:share_R_ablation} reports this matched-budget comparison on LLaMA-2-7B math. Paired sharing wins both metrics at both budgets: at $\sim 27.6$\,M parameters, paired $r{=}128$ exceeds default $r{=}64$ by $2.3$ points on GSM8K and $1.5$ on MATH; at $\sim 13.6$\,M, paired $r{=}64$ exceeds default $r{=}32$ by $2.0$ on GSM8K and $0.5$ on MATH. We adopt no sharing as the default in our main results; the paired variant is a drop-in upgrade when doubling the per-slice rank within a fixed budget is preferred.

\begin{table}[h]
\centering
\caption{Shared spectrum scale ablation on LLaMA-2-7B, commonsense 8-task average.}
\label{tab:sub_variant_ablation}
\small
\setlength{\tabcolsep}{4pt}
\begin{tabular}{lcc}
\toprule
CORA tier & w/o $\boldsymbol{\delta}$ & w/ $\boldsymbol{\delta}$ \\
\midrule
$r{=}64$ (27.6\,M)  & 77.57 & 79.95 \\
$r{=}32$ (13.6\,M)  & 81.23 & 82.06 \\
$r{=}16$ ( 6.6\,M)  & 81.38 & \textbf{82.16} \\
$r{=}8$  ( 3.1\,M)  & 68.00 & 81.79 \\
\bottomrule
\end{tabular}
\end{table}

\begin{table}[h]
\centering
\caption{Rotation sharing across slice pairs on LLaMA-2-7B math (GSM8K + MATH) at matched parameter budgets. Paired at rank $r$ has the same parameter count as default at rank $r/2$. Each cell reports GSM8K/MATH; $\Delta$ is paired minus default.}
\label{tab:share_R_ablation}
\small
\setlength{\tabcolsep}{4pt}
\begin{tabular}{lccc}
\toprule
Params & default & paired & $\Delta$ \\
\midrule
$\sim 27.6$\,M & 62.0/12.7 ($r{=}64$)  & 64.3/14.2 ($r{=}128$) & $+2.3/+1.5$ \\
$\sim 13.6$\,M & 58.3/11.5 ($r{=}32$)  & 60.3/12.0 ($r{=}64$)  & $+2.0/+0.5$ \\
\bottomrule
\end{tabular}
\end{table}

\paragraph{Source-matrix variant ($W_{\mathrm{lr}}$ vs $W_0$).}
Theorem~\ref{thm:slice_coherent} applies to any source matrix with a per-slice SVD. We compare the rank-$r$ reconstruction $W_{\mathrm{lr}}$ (default) against the full pretrained weight $W_0$ on commonsense (Table~\ref{tab:source_matrix_ablation}). On both LLaMA models, $W_0$ at $r{=}128$ ($\sim 56$\,M) approaches the $W_{\mathrm{lr}}$ tier band (within $1$ point on LLaMA-2-7B, within band on LLaMA-3-8B) at $4$--$8\times$ the parameter cost. The rank-$r$ truncation in $W_{\mathrm{lr}}$ acts as a spectral regularizer in CORA's target regime; $W_0$ retains the tail singular directions of $W_0$ that the truncation discards and applies on tasks that rely on those directions (e.g., MATH).

\paragraph{$U$-side rotation only.}
Proposition~\ref{prop:R_equiv_shareQ} shows that CORA's default form realizes $Q_{U,i} = Q_{V,i} = Q_i$ from a single learnable $R_i$ via the identity $Q_i = U_i^\top R_i U_i$, with $R_i U_i = U_i Q_{U,i}$ acting on the $U$-side and $Q_i^\top V_i^\top$ rotating the $V$-side. To verify that the $V$-side rotation $Q_{V,i}$ is necessary, we drop it and keep only $Q_{U,i}$: the $U$-only variant uses $\widetilde W_i = R_i U_i \Sigma_i\,\mathrm{diag}(\mathbf{1}+\boldsymbol{\delta})\, V_i^\top$ in place of the default $\widetilde W_i = R_i U_i \Sigma_i\,\mathrm{diag}(\mathbf{1}+\boldsymbol{\delta})\, Q_i^\top V_i^\top$; the skew-parameter count of $R_i$ is unchanged. Table~\ref{tab:one_sided_ablation} reports HumanEval Pass@1 on Mistral-7B. Removing $Q_{V,i}$ costs $2.4$ to $9.8$ points across ranks; the loss is largest at $r{=}128$ where the F variant relies most on the two-sided structure. This confirms that realizing the coherent rotation form on both singular bases is a load-bearing part of CORA's design.

\begin{table}[h]
\begin{minipage}[t]{0.48\linewidth}
\centering
\caption{Source-matrix variant on commonsense 8-task average. Two variants: $W_{\mathrm{lr}}$ (default) and $W_0$. L2-7B / L3-8B: averages on LLaMA-2-7B / LLaMA-3-8B.}
\label{tab:source_matrix_ablation}
\small
\setlength{\tabcolsep}{4pt}
\begin{tabular}{lcrcc}
\toprule
Variant & $r$ & \#Params & L2-7B & L3-8B \\
\midrule
$W_{\mathrm{lr}}$ & 16 &  6.6\,M & 82.16 & 86.65 \\
$W_{\mathrm{lr}}$ & 32 & 13.6\,M & 82.06 & 86.75 \\
$W_0$          & 128 & 55.7\,M & 81.30 & 86.70 \\
\bottomrule
\end{tabular}
\end{minipage}
\hfill
\begin{minipage}[t]{0.48\linewidth}
\centering
\caption{$U$-side rotation only on Mistral-7B HumanEval Pass@1. \emph{default}: two-sided + $\boldsymbol{\delta}$; \emph{$U$-only + $\boldsymbol{\delta}$}: $V$-side dropped; \emph{$U$-only}: $V$-side and $\boldsymbol{\delta}$ dropped.}
\label{tab:one_sided_ablation}
\small
\setlength{\tabcolsep}{4pt}
\begin{tabular}{lccc}
\toprule
$r$ & default & $U$-only + $\boldsymbol{\delta}$ & $U$-only \\
\midrule
$r{=}128$ ($W_0$) & 40.9 & 31.1 & 29.3 \\
$r{=}64$  & 40.9 & 35.4 & 38.4 \\
$r{=}32$  & 40.9 & 38.4 & 40.9 \\
$r{=}16$  & 40.2 & 37.8 & 35.4 \\
$r{=}8$   & 40.2 & 36.0 & 39.6 \\
\bottomrule
\end{tabular}
\end{minipage}
\end{table}

\section{Conclusion}\label{sec:discussion}

We extended the coherent rotation form of finetuning from the full weight matrix to its row slices, and showed that the resulting per-slice condition reduces to an algebraic identity via $Q_i = U_i^\top R_i U_i$, removing the need for an explicit orthogonality constraint during optimization. CORA learns one orthogonal $R_i$ per slice of the rank-$r$ reconstruction $W_{\mathrm{lr}}$, uses $\tfrac{1}{2}m(r{-}1)$ trainable parameters per linear layer independent of the column dimension $k$, and reassembles into a dense weight of the same shape as $W_0$ at deployment. Empirically, CORA $r{=}16$ at $6.6$\,M parameters reaches $82.16\%$ commonsense accuracy on LLaMA-2-7B; the same method transfers to LLaMA-3-8B (commonsense and code) and Mistral-7B (code).

\paragraph{Limitations.}
CORA has several limitations. First, every adapted layer requires an offline SVD of $W_0$, and the per-slice factors $(U_i, \Sigma_i, V_i)$ must be computed and cached on disk before training begins. Second, the slice-wise forward $\widetilde W_i = R_i U_i \Sigma_i Q_i^\top V_i^\top$ adds matmuls per layer at training time relative to LoRA's $W_0 + BA$ form. Third, our largest evaluated backbone is 8B (LLaMA-3-8B), and we have not validated CORA at $\geq 30$B scale (e.g., LLaMA-3-70B, Mixtral $8\!\times\!22$B). Fourth, evaluation covers commonsense reasoning, math, and code generation under instruction tuning; vision-language, multilingual, long-context, and continual-learning settings remain unexplored.



{\small
\bibliographystyle{unsrtnat}
\bibliography{8-references}

@inproceedings{qiu2025oftv2,
  title     = {Orthogonal Finetuning Made Scalable},
  author    = {Qiu, Zeju and Liu, Weiyang and Weller, Adrian and Sch{\"o}lkopf, Bernhard},
  booktitle = {EMNLP},
  year      = {2025},
}

@article{gurses2025diablo,
  title   = {{DiaBlo}: Diagonal Blocks Are Sufficient for Finetuning},
  author  = {Gurses, Selcuk and Zhang, Aozhong and Deng, Yanxia and Dong, Xun and Li, Xin and Wang, Naigang and Yin, Penghang and Yang, Zi},
  journal = {arXiv preprint arXiv:2506.03230},
  year    = {2025},
}

@inproceedings{hu2022lora,
  title     = {{LoRA}: Low-Rank Adaptation of Large Language Models},
  author    = {Hu, Edward J. and Shen, Yelong and Wallis, Phillip and Allen-Zhu, Zeyuan and Li, Yuanzhi and Wang, Shean and Wang, Lu and Chen, Weizhu},
  booktitle = {ICLR},
  year      = {2022},
}

@inproceedings{hu2023llmadapters,
  title     = {{LLM-Adapters}: An Adapter Family for Parameter-Efficient Fine-Tuning of Large Language Models},
  author    = {Hu, Zhiqiang and Wang, Lei and Lan, Yihuai and Xu, Wanyu and Lim, Ee-Peng and Bing, Lidong and Xu, Xing and Poria, Soujanya and Lee, Roy Ka-Wei},
  booktitle = {EMNLP},
  year      = {2023},
}

@inproceedings{wang2025kasa,
  title     = {{KaSA}: Knowledge-Aware Singular-Value Adaptation of Large Language Models},
  author    = {Wang, Fan and Jiang, Juyong and Park, Chansung and Kim, Sunghun and Tang, Jing},
  booktitle = {ICLR},
  year      = {2025},
}

@inproceedings{lingam2024svft,
  title     = {{SVFT}: Parameter-Efficient Fine-Tuning with Singular Vectors},
  author    = {Lingam, Vijay and Tejaswi, Atula and Vavre, Aditya and Shetty, Aneesh and Gudur, Gautham Krishna and Ghosh, Joydeep and Dimakis, Alex and Choi, Eunsol and Bojchevski, Aleksandar and Sanghavi, Sujay},
  booktitle = {NeurIPS},
  year      = {2024},
}

@inproceedings{liu2024dora,
  title     = {{DoRA}: Weight-Decomposed Low-Rank Adaptation},
  author    = {Liu, Shih-Yang and Wang, Chien-Yi and Yin, Hongxu and Molchanov, Pavlo and Wang, Yu-Chiang Frank and Cheng, Kwang-Ting and Chen, Min-Hung},
  booktitle = {ICML},
  year      = {2024},
}

@inproceedings{liu2024boft,
  title     = {Parameter-Efficient Orthogonal Finetuning via Butterfly Factorization},
  author    = {Liu, Weiyang and Qiu, Zeju and Feng, Yao and Xiu, Yuliang and Xue, Yuxuan and Yu, Longhui and Feng, Haiwen and Liu, Zhen and Heo, Juyeon and Peng, Songyou and Wen, Yandong and Black, Michael J. and Weller, Adrian and Sch{\"o}lkopf, Bernhard},
  booktitle = {ICLR},
  year      = {2024},
}

@inproceedings{meng2024pissa,
  title     = {{PiSSA}: Principal Singular Values and Singular Vectors Adaptation of Large Language Models},
  author    = {Meng, Fanxu and Wang, Zhaohui and Zhang, Muhan},
  booktitle = {NeurIPS},
  year      = {2024},
}

@inproceedings{qiu2025poet,
  title     = {Reparameterized {LLM} Training via Orthogonal Equivalence Transformation},
  author    = {Qiu, Zeju and Buchholz, Simon and Xiao, Tim Z. and Dax, Maximilian and Sch{\"o}lkopf, Bernhard and Liu, Weiyang},
  booktitle = {NeurIPS},
  year      = {2025},
}

@inproceedings{qiu2023controlling,
  title     = {Controlling Text-to-Image Diffusion by Orthogonal Finetuning},
  author    = {Qiu, Zeju and Liu, Weiyang and Feng, Haiwen and Xue, Yuxuan and Feng, Yao and Liu, Zhen and Zhang, Dan and Weller, Adrian and Sch{\"o}lkopf, Bernhard},
  booktitle = {NeurIPS},
  year      = {2023},
}

@book{stewart1990matrix,
  title     = {Matrix Perturbation Theory},
  author    = {Stewart, G. W. and Sun, Ji-Guang},
  publisher = {Academic Press},
  year      = {1990},
}

@inproceedings{wang2025milora,
  title     = {{MiLoRA}: Harnessing Minor Singular Components for Parameter-Efficient {LLM} Finetuning},
  author    = {Wang, Hanqing and Li, Yixia and Wang, Shuo and Chen, Guanhua and Chen, Yun},
  booktitle = {NAACL},
  year      = {2025},
}

@article{wang2025wsbm,
  title         = {Geometric and Spectral Alignment for Deep Neural Network {II}},
  author        = {Liu, Ziran and Wang, Wei and Wang, Jinhao and
                   Wang, Pengcheng and Sui, Xinyi and Ruan, Cihan and
                   Ling, Nam and Jiang, Wei},
  journal       = {arXiv preprint arXiv:2605.02111},
  year          = {2026},
  eprint        = {2605.02111},
  archivePrefix = {arXiv},
  primaryClass  = {cs.LG},
  url           = {https://arxiv.org/abs/2605.02111},
}

@inproceedings{wu2025psoft,
  title     = {Efficient Orthogonal Fine-Tuning with Principal Subspace Adaptation},
  author    = {Wu, Fei and Hu, Jia and Min, Geyong and Wang, Shiqiang},
  booktitle = {ICLR},
  year      = {2026},
}

@inproceedings{zhang2023adalora,
  title     = {{AdaLoRA}: Adaptive Budget Allocation for Parameter-Efficient Fine-Tuning},
  author    = {Zhang, Qingru and Chen, Minshuo and Bukharin, Alexander and He, Pengcheng and Cheng, Yu and Chen, Weizhu and Zhao, Tuo},
  booktitle = {ICLR},
  year      = {2023},
}

@inproceedings{yang2024corda,
  title     = {{CorDA}: Context-Oriented Decomposition Adaptation of Large Language Models for Task-Aware Parameter-Efficient Fine-Tuning},
  author    = {Yang, Yibo and Li, Xiaojie and Zhou, Zhongzhu and Song, Shuaiwen Leon and Wu, Jianlong and Nie, Liqiang and Ghanem, Bernard},
  booktitle = {NeurIPS},
  year      = {2024},
}

@inproceedings{kopiczko2024vera,
  title     = {{VeRA}: Vector-based Random Matrix Adaptation},
  author    = {Kopiczko, Dawid J. and Blankevoort, Tijmen and Asano, Yuki M.},
  booktitle = {ICLR},
  year      = {2024},
}

@inproceedings{li2024loftq,
  title     = {{LoftQ}: {LoRA}-Fine-Tuning-Aware Quantization for Large Language Models},
  author    = {Li, Yixiao and Yu, Yifan and Liang, Chen and He, Pengcheng and Karampatziakis, Nikos and Chen, Weizhu and Zhao, Tuo},
  booktitle = {ICLR},
  year      = {2024},
}

@inproceedings{wang2024loraga,
  title     = {{LoRA-GA}: Low-Rank Adaptation with Gradient Approximation},
  author    = {Wang, Shaowen and Yu, Linxi and Li, Jian},
  booktitle = {NeurIPS},
  year      = {2024},
}

@inproceedings{wang2025lorapro,
  title     = {{LoRA-Pro}: Are Low-Rank Adapters Properly Optimized?},
  author    = {Wang, Zhengbo and Liang, Jian and He, Ran and Wang, Zilei and Tan, Tieniu},
  booktitle = {ICLR},
  year      = {2025},
}

@inproceedings{salimans2016weight,
  title     = {Weight Normalization: A Simple Reparameterization to Accelerate Training of Deep Neural Networks},
  author    = {Salimans, Tim and Kingma, Diederik P.},
  booktitle = {NeurIPS},
  year      = {2016},
}

@inproceedings{zhang2025lori,
  title     = {{LoRI}: Reducing Cross-Task Interference in Multi-Task Low-Rank Adaptation},
  author    = {Zhang, Juzheng and You, Jiacheng and Panda, Ashwinee and Goldstein, Tom},
  booktitle = {COLM},
  year      = {2025},
}

@article{yu2024metamath,
  title     = {{MetaMath}: Bootstrap Your Own Mathematical Questions for Large Language Models},
  author    = {Yu, Longhui and Jiang, Weisen and Shi, Han and Yu, Jincheng and Liu, Zhengying and Zhang, Yu and Kwok, James T. and Li, Zhenguo and Weller, Adrian and Liu, Weiyang},
  journal   = {arXiv preprint arXiv:2309.12284},
  year      = {2023},
}

@article{cobbe2021gsm8k,
  title     = {Training Verifiers to Solve Math Word Problems},
  author    = {Cobbe, Karl and Kosaraju, Vineet and Bavarian, Mohammad and Chen, Mark and Jun, Heewoo and Kaiser, Lukasz and Plappert, Matthias and Tworek, Jerry and Hilton, Jacob and Nakano, Reiichiro and Hesse, Christopher and Schulman, John},
  journal   = {arXiv preprint arXiv:2110.14168},
  year      = {2021},
}

@article{hendrycks2021math,
  title     = {Measuring Mathematical Problem Solving with the {MATH} Dataset},
  author    = {Hendrycks, Dan and Burns, Collin and Kadavath, Saurav and Arora, Akul and Basart, Steven and Tang, Eric and Song, Dawn and Steinhardt, Jacob},
  journal   = {arXiv preprint arXiv:2103.03874},
  year      = {2021},
}

@misc{chaudhary2023codealpaca,
  title        = {{Code Alpaca}: An Instruction-following {LLaMA} Model for Code Generation},
  author       = {Chaudhary, Sahil},
  year         = {2023},
  howpublished = {\url{https://github.com/sahil280114/codealpaca}},
}

@article{chen2021humaneval,
  title     = {Evaluating Large Language Models Trained on Code},
  author    = {Chen, Mark and Tworek, Jerry and Jun, Heewoo and Yuan, Qiming and Pinto, Henrique Ponde de Oliveira and Kaplan, Jared and Edwards, Harri and Burda, Yuri and Joseph, Nicholas and Brockman, Greg and others},
  journal   = {arXiv preprint arXiv:2107.03374},
  year      = {2021},
}

@book{horn2012matrix,
  title     = {Matrix Analysis},
  author    = {Horn, Roger A. and Johnson, Charles R.},
  edition   = {2nd},
  publisher = {Cambridge University Press},
  year      = {2012},
}
}

\appendix

\section{Proof of Theorem~\ref{thm:slice_coherent}}\label{app:proof_slice_coherent}
\begin{proof}
The Frobenius norm of a row-partitioned perturbation decomposes as
\[
    \|\Delta W\|_F^2
    =
    \sum\nolimits_{i=1}^{s}\|\Delta W_i\|_F^2 .
\]

By Assumption~\ref{assump:slice_regularity}, each slice
$W_i = U_i\Sigma_iV_i^\top$ satisfies the standard stability hypotheses
(non-degenerate spectrum, bounded condition number, small Frobenius
perturbation). The per-slice Frobenius norm is invariant under the SVD
coordinates:
\[
    \|\Delta W_i\|_F
    =
    \|U_i^\top \Delta W_i V_i\|_F ,
\]
since $U_i$ is orthogonal and $V_i$ has orthonormal columns on the slice
subspace. The slice-level minimum-perturbation problem therefore reduces,
in $(U_i,\Sigma_i,V_i)$ coordinates, to a spectrum-fixed Frobenius
minimization on the local core.

Write the slice perturbation in local coordinates as
\(\Delta W_i = U_i P_i V_i^\top\), for some $P_i \in \mathbb{R}^{r\times r}$. Then
\(W_i^\star
    =
    U_i(\Sigma_i+P_i)V_i^\top\).
Let
\(\Sigma_i+P_i
    =
    Q_{U,i}^\star D_i (Q_{V,i}^\star)^\top\)
be the SVD of the local adapted core, with $D_i$ diagonal. By the
Schur--Horn / Von Neumann trace-inequality argument
\cite[\S 4.3 and \S 7.4]{horn2012matrix}, the constrained minimum of
\(\|(\Sigma_i+P_i)-\Sigma_i\|_F\) over $P_i$, at fixed spectrum of
$\Sigma_i+P_i$, is attained by a unitary similarity transform of
$\Sigma_i$. Therefore,
\[
    Q_{U,i}^\star = Q_{V,i}^\star = Q_i,
    \qquad
    D_i = \Sigma_i+\Delta\Sigma_i .
\]
Substituting yields \(W_i^\star = U_i Q_i (\Sigma_i + \Delta\Sigma_i) Q_i^\top V_i^\top\).
\end{proof}

\section{Proof of Proposition~\ref{prop:R_equiv_shareQ}}\label{app:proof_one_sided}
\begin{proof}
Part (i) is the algebraic identity $\widetilde W_i = U_i Q_i \Sigma_i\,\mathrm{diag}(\mathbf{1}+\boldsymbol{\delta})\, Q_i^\top V_i^\top$ established in Lemma~\ref{lem:cora_coherent}; we focus on (ii). For brevity we present the argument for $\boldsymbol{\delta} = \mathbf{0}$; replacing $\Sigma_i$ throughout with the diagonal matrix $\Sigma_i\,\mathrm{diag}(\mathbf{1}+\boldsymbol{\delta})$ leaves every step unchanged, since the spectral and orthogonality structure used below is preserved.

Substituting $Q_i=U_i^\top R_iU_i$ into $\widetilde W_i = R_i U_i \Sigma_i Q_i^\top V_i^\top$ gives
\[
    \widetilde W_i
    =
    R_iU_i\Sigma_iU_i^\top R_i^\top U_iV_i^\top
    =
    \underbrace{R_i(U_i\Sigma_iU_i^\top)R_i^\top}_{B_i}
    \underbrace{U_iV_i^\top}_{C_i}.
\]
Since $U_i\Sigma_iU_i^\top$ is symmetric positive semidefinite, $B_i$ is also symmetric positive semidefinite. Moreover,
    \(C_iC_i^\top
    =
    U_iV_i^\top V_iU_i^\top
    =
    U_iU_i^\top
    =
    I_r\),
so $C_i$ has orthonormal rows.

Let $B_i=E_i\Lambda_iE_i^\top$, with $\Lambda_i = \diag(\mu_1, \ldots, \mu_r)$, $\mu_\ell \geq 0$. Then
    \(\widetilde W_i\widetilde W_i^\top
    =
    B_iC_iC_i^\top B_i^\top
    =
    B_i^2
    =
    E_i\Lambda_i^2E_i^\top\),
so the left singular vectors of $\widetilde W_i$ are $\widetilde U_i=E_i$. The corresponding right singular vectors are
    \(\widetilde V_i
    =
    C_i^\top E_i
    =
    V_iU_i^\top E_i\).
Therefore,
    \(Q_{U,i}^\star
    =
    U_i^\top \widetilde U_i
    =
    U_i^\top E_i\),
and
    \(Q_{V,i}^\star
    =
    V_i^\top \widetilde V_i
    =
    V_i^\top V_iU_i^\top E_i
    =
    U_i^\top E_i\).
Hence $Q_{U,i}^\star=Q_{V,i}^\star$, proving (ii).
\end{proof}

\section{Algebraic identity for CORA}\label{app:coherent_derivation}

The main text's Theorem~\ref{thm:global_coherent} (``coherent rotation form'') is the $s = 1$ specialization of Theorem~\ref{thm:slice_coherent}, whose per-slice proof above applies directly; the parameterization $W = U Q_U \Sigma Q_V^\top V^\top$ matches~\cite[Defn.~9.3]{wang2025wsbm}.
Direct calculation shows that for any orthogonal $R_i$, CORA's parameterization satisfies Eq.~\eqref{eq:coherent_form} exactly.
The argument uses no optimization assumption and verifies the method in Section~\ref{sec:method_overview}.

\begin{lemma}[Coherent rotation form holds identically]\label{lem:cora_coherent}
Let $W_i = U_i \Sigma_i V_i^\top$ be a slice with SVD, $U_i \in \R^{r \times r}$ square orthogonal, $\Sigma_i \in \R^{r \times r}$ diagonal, $V_i \in \R^{k \times r}$ with orthonormal columns. Let $R_i \in \R^{r \times r}$ be orthogonal, and set $Q_i := U_i^\top R_i U_i$. Then:
\begin{enumerate}
    \item[(i)] $Q_i$ is orthogonal.
    \item[(ii)] The CORA parameterization $\widetilde W_i := R_i U_i \Sigma_i Q_i^\top V_i^\top$ equals
    \[
        \widetilde W_i \;=\; U_i \, Q_i \, \Sigma_i \, Q_i^\top \, V_i^\top,
    \]
    i.e., the coherent rotation form~\eqref{eq:coherent_form} with $\Delta\Sigma = 0$.
    \item[(iii)] Allowing a per-layer multiplicative scale $\Sigma_i \mapsto \Sigma_i\,\mathrm{diag}(\mathbf{1}+\boldsymbol{\delta})$ with $\boldsymbol{\delta} \in \mathbb{R}^r$ (the \textbf{CORA+$\boldsymbol{\delta}$} variant) produces $\widetilde W_i = U_i Q_i \Sigma_i\,\mathrm{diag}(\mathbf{1}+\boldsymbol{\delta})\,Q_i^\top V_i^\top$. This still satisfies Eq.~\eqref{eq:coherent_form}, though the spectrum shift it spans is a one-parameter family rather than the full diagonal $\Delta\Sigma_i$ allowed by Theorem~\ref{thm:slice_coherent}.
\end{enumerate}
\end{lemma}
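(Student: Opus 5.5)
The plan is a direct algebraic verification; no optimization or stability hypothesis is needed. For (i), I would compute $Q_i^\top Q_i = U_i^\top R_i^\top U_i U_i^\top R_i U_i$. Because $U_i$ is square orthogonal, $U_iU_i^\top = I_r$. Because $R_i$ is orthogonal, $R_i^\top R_i = I_r$. Together these collapse the product to $U_i^\top U_i = I_r$. The same computation in the other order gives $Q_iQ_i^\top = I_r$, so $Q_i \in \mathrm{O}(r)$.

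For (ii), the one identity needed is the one already used in Section~\ref{sec:theory_implicit}, namely $U_iQ_i = U_iU_i^\top R_iU_i = R_iU_i$. Reading it right to left, I replace the leading factor $R_iU_i$ in $\widetilde W_i = R_iU_i\Sigma_iQ_i^\top V_i^\top$ by $U_iQ_i$. This gives $\widetilde W_i = U_iQ_i\Sigma_iQ_i^\top V_i^\top$, which is Eq.~\eqref{eq:coherent_form} (in its per-slice version, Eq.~\eqref{eq:slice_coherent_form}) with $Q = Q_i$ and $\Delta\Sigma = 0$.

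For (iii), the same substitution works verbatim. The rewrite only touches the factor $R_iU_i$ to the left of the diagonal core, so I replace $\Sigma_i$ by the diagonal matrix $\Sigma_i\,\mathrm{diag}(\mathbf{1}+\boldsymbol{\delta})$. This yields $U_iQ_i\Sigma_i\,\mathrm{diag}(\mathbf{1}+\boldsymbol{\delta})Q_i^\top V_i^\top$. This matches Eq.~\eqref{eq:coherent_form} with $\Delta\Sigma_i = \Sigma_i\,\diag(\boldsymbol{\delta})$, which is diagonal as required.

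It remains to justify the remark that this is a restricted family. Since $\boldsymbol{\delta}$ is shared across slices, $\Delta\Sigma_i$ is determined by $\Sigma_i$ and the single per-layer vector $\boldsymbol{\delta}$. It is therefore not an arbitrary per-slice diagonal matrix. This is a statement about the parameterization only, so it needs no proof beyond pointing it out.

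There is no real obstacle. The only point needing care is to use $U_iU_i^\top = I_r$, which holds because $U_i$ is square. The analogous identity fails for $V_i$, since $V_iV_i^\top \neq I_k$ in general. I would note that the argument never needs to cancel $V_i$, and that $Q_i^\top$ sits between the core and $V_i^\top$ untouched.
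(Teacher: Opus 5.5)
Your proposal is correct and follows essentially the same route as the paper: (i) is the same collapse of $Q_i^\top Q_i$ via $U_iU_i^\top = I_r$ and $R_i^\top R_i = I_r$, and (ii)--(iii) rest on the identity $R_iU_i = U_iQ_i$ (the paper writes it as $R_i = U_iQ_iU_i^\top$), followed by the same substitution with $\Sigma_i$ replaced by $\Sigma_i\,\mathrm{diag}(\mathbf{1}+\boldsymbol{\delta})$. Your explicit identification $\Delta\Sigma_i = \Sigma_i\,\mathrm{diag}(\boldsymbol{\delta})$, and your reading of the restriction as coming from $\boldsymbol{\delta}$ being shared across slices, are small clarifications that the paper leaves implicit.
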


\begin{proof}
(i) Since $U_i$ is square orthogonal, $U_i U_i^\top = U_i^\top U_i = I_r$. Hence
$Q_i^\top Q_i = (U_i^\top R_i U_i)^\top (U_i^\top R_i U_i) = U_i^\top R_i^\top (U_i U_i^\top) R_i U_i = U_i^\top R_i^\top R_i U_i = U_i^\top U_i = I_r$.

(ii) Rewrite $R_i = U_i (U_i^\top R_i U_i) U_i^\top = U_i Q_i U_i^\top$ using $U_i U_i^\top = I_r$. Substituting,
\[
    \widetilde W_i \;=\; R_i U_i \Sigma_i Q_i^\top V_i^\top
    \;=\; (U_i Q_i U_i^\top)\, U_i \Sigma_i Q_i^\top V_i^\top
    \;=\; U_i Q_i (U_i^\top U_i) \Sigma_i Q_i^\top V_i^\top
    \;=\; U_i Q_i \Sigma_i Q_i^\top V_i^\top.
\]

(iii) Immediate from (ii) by replacing $\Sigma_i$ with $\Sigma_i\,\mathrm{diag}(\mathbf{1}+\boldsymbol{\delta})$ (a diagonal operation; per-layer across slices).
\end{proof}

Lemma~\ref{lem:cora_coherent} shows that a single rotation $R_i$ on the left singular basis is enough to recover the coherent rotation form; turning on the shared scale $\boldsymbol{\delta}$ adds a one-parameter spectrum shift inside Eq.~\eqref{eq:coherent_form}.

\section{Cayley implementation}\label{app:cayley_impl}

The map $\phi:\mathrm{skew}(b) \to \mathrm{O}(b)$ used in Section~\ref{sec:method_overview} converts each skew-symmetric block $A_i^{(g)}$ into an orthogonal rotation via the closed-form Cayley map $\phi(A) = (I - A)(I + A)^{-1}$, computed with a linear solve. It produces an exactly orthogonal rotation up to numerical precision and imposes no convergence-radius constraint on $A_i^{(g)}$, so no auxiliary regularizer is needed.

\section{Per-slice coherence diagnostic}\label{app:slice_coherence}

We measure per-slice coherence $\cos(Q_{U,i}, Q_{V,i})$ on $32 \times 32$ row slices, averaged over the 160 target layers of LLaMA-2-7B, to verify that adapters satisfying Theorem~\ref{thm:slice_coherent} keep this quantity close to 1. Full FT, PiSSA, MiLoRA, and CORA all stay at or above $0.995$, confirming that the SVD-family adapters realize the per-slice coherent rotation form.

\begin{table}[h]
\centering
\caption{Per-slice coherence $\cos(Q_{U,i}, Q_{V,i})$ at $32 \times 32$ row slices, averaged over 160 layers of LLaMA-2-7B. CORA satisfies $Q_{U,i} = Q_{V,i}$ identically by Lemma~\ref{lem:cora_coherent}.}
\label{tab:slice_coherence}
\small
\begin{tabular}{lc}
\toprule
Method & Per-slice $\cos(Q_U, Q_V)$ \\
\midrule
Full FT           & 0.9994 \\
PiSSA $r{=}128$   & 0.9995 \\
MiLoRA $r{=}32$   & 0.9950 \\
\midrule
CORA (ours)         & \textbf{0.9999} \\
\bottomrule
\end{tabular}
\end{table}

\section{Spectral-fingerprint diagnostic}\label{app:usv_diagnostic}

The two adaptation axes (spectrum shift $\Delta\Sigma$ and coherent rotation $Q$ with $Q_U = Q_V$) can be measured directly against a Full-FT reference.
This appendix defines the diagnostic, reports it across representative PEFT methods, and discusses what the numbers reveal about how closely each method satisfies Eq.~\eqref{eq:coherent_form}.

\paragraph{Two quantities.}
Given a finetuned weight $W$ with SVD $W = U \Sigma V^\top$, we measure, relative to $W_0$:
\begin{enumerate}[leftmargin=1.5em]
    \item[(i)] the relative singular-value shift $\|\Delta\Sigma / \Sigma\|$; and
    \item[(ii)] the spectral coherence $\cos(Q_U, Q_V)$ with $Q_U = U_0^\top U_{:,r}$ and $Q_V = V_0^\top V_{:,r}$.
\end{enumerate}
Quantity (i) measures \emph{how much} the adapter moves the spectrum; quantity (ii) measures whether the left and right subspaces move \emph{together}, which is the defining condition of the coherent rotation form.

\begin{table}[h]
\centering
\caption{Global-SVD spectral fingerprint of representative finetuning methods on LLaMA-2-7B (commonsense\_170k, 3 epochs, averaged over the 160 target layers).}
\label{tab:usv_empirical}
\small
\begin{tabular}{@{}lccc@{}}
\toprule
Method & $|\Delta\Sigma/\Sigma|$ & $\cos(Q_U, Q_V)$ & Avg acc.\ \\
\midrule
Full FT                       & $0.001$ & $0.99999$ & 83.5 \\
PiSSA $r{=}128$               & $0.006$ & $0.99993$ & 81.2 \\
DoRA $r{=}128$                & $0.391$ & $0.99074$ & 77.2 \\
\bottomrule
\end{tabular}
\end{table}

\paragraph{Coherence tracks accuracy.}
$\cos(Q_U, Q_V) \geq 0.9999$ holds for Full FT; an adapter that approximates the minimum-perturbation form (Theorem~\ref{thm:global_coherent}) inherits the same coherence. PiSSA at $r{=}128$ comes within $10^{-5}$ of Full FT on this axis and is within $2.3$ points in accuracy. DoRA at the same parameter budget drifts to $\cos(Q_U, Q_V) = 0.99074$ and trails by $\sim 6$ points. The diagnostic separates adapters that satisfy the coherent rotation form from those that depart from it; CORA, by Lemma~\ref{lem:cora_coherent}, satisfies Eq.~\eqref{eq:coherent_form} exactly.

\section{Hyperparameter summary}\label{app:hyperparams}

\begin{table}[h]
\centering
\caption{Shared hyperparameters across main-text experiments.}
\label{tab:hp_shared}
\small
\begin{tabular}{ll}
\toprule
Optimizer              & AdamW ($\beta_1 = 0.9, \beta_2 = 0.999$) \\
LR schedule            & Cosine decay, 100 warmup steps \\
Batch size             & 16 (commonsense / math), 32 (code) \\
Micro-batch (H100)     & 16, no gradient checkpointing \\
Micro-batch (L40S)     & 4 + gradient checkpointing \\
Weight decay           & 0 \\
Precision              & bf16 mixed \\
\texttt{torch.compile} & Enabled \\
$L_2$ on $\boldsymbol{\delta}$  & $10^{-3}$ \\
\bottomrule
\end{tabular}
\end{table}


\end{document}